\documentclass[11pt]{article}
\PassOptionsToPackage{hyphens}{url}
\usepackage {acl}
\usepackage[T1]{fontenc}
\usepackage[utf8]{inputenc}
\usepackage{times}
\usepackage{latexsym}
\usepackage{microtype}
\usepackage{amsmath,amssymb,amsthm,mathtools}
\usepackage{booktabs}
\usepackage{multirow}
\usepackage{graphicx}
\usepackage{xcolor}
\usepackage{tabularx}
\usepackage{array}
\usepackage{url}
\usepackage{hyperref}
\hypersetup{hidelinks}
\newtheorem{theorem}{Theorem}
\newtheorem{proposition}[theorem]{Proposition}
\newtheorem{lemma}[theorem]{Lemma}
\newtheorem{corollary}[theorem]{Corollary}
\theoremstyle{definition}

\theoremstyle{remark}

\newcommand{\E}{\mathbb{E}}
\newcommand{\Prb}{\mathbb{P}}

\newcommand{\CE}{\operatorname{CE}^{(1)}}
\newcommand{\Brier}{\operatorname{Brier}}
\newcommand{\AUROC}{\operatorname{AUROC}}
\newcommand{\src}{\mathrm{s}}
\newcommand{\tgt}{\mathrm{t}}

\newcommand{\calG}{\mathcal{G}}
\newcommand{\ind}{\mathbf{1}}

\newcommand{\figureasset}[2][]{%
  \IfFileExists{#2}{\includegraphics[#1]{#2}}{%
    \fbox{\parbox[c][32mm][c]{0.90\linewidth}{\centering
    Missing figure asset: \texttt{\detokenize{#2}}}}}}
\title{When Accuracy Gaps Fail to Certify:\\ Auditing Cross-Domain Recalibration of LLM Judges}

\author{
\textbf{Fariya Afrin}\textsuperscript{1}%
\thanks{Equal contribution.}%
\and
\textbf{Ibne Farabi Shihab}\textsuperscript{2}\footnotemark[1]
\thanks{Corresponding author: \texttt{ishihab@iastate.edu}.}
\\[2pt]
\textsuperscript{1}Department of Computer Science, Kalinga Institute of Industrial Technology \\
\textsuperscript{2}Department of Computer Science, Iowa State University \\
\texttt{ishihab@iastate.edu}
}

\begin{document}
\maketitle

\begin{abstract}
A scalar recalibration map fitted for an LLM judge on one task can fail when the task distribution changes, but the source--target accuracy gap is often treated as a proxy for that failure. We test what this gap can predict and what it can certify across thirteen judges, two generators, eight domains, and 1{,}176 predeclared transfers. After accounting for mean score shift, the gap yields a population lower bound on target calibration error, yet identical gaps can induce opposite transfer outcomes. Exact importance weighting recovers target proper loss under covariate shift, so failure of an estimated weighting pipeline does not by itself establish conditional shift. A finite-sample simultaneous lower certificate converts the population bound into a one-sided rejection rule using audit labels disjoint from evaluation outcomes. The leak-free gap correlation is $0.25$ (95\% CI $[-0.09,0.55]$), falls to $0.09$ on the second generator, and does not support a generator-invariant association. The certificate retains nominal coverage but has power $0.13$ even at $m{=}1024$, whereas target-domain temperature scaling with 16 labels reaches harm rate $0.09$, compared with $0.34$ for source-fitted Platt scaling. Accuracy gaps are therefore weak warning signals for scalar probability transfer, not deployment certificates.
\end{abstract}

\section{Introduction}

A judge can preserve the ordering of answers while its reported probabilities
become unusable after a task change. This failure matters whenever a score such
as $0.8$ controls acceptance, review, or deferral rather than serving only as a
ranking statistic. In that setting, calibration requires approximately $80\%$
of the cases assigned probability $0.8$ to be correct. The requirement is local
to a distribution, not an intrinsic property of the judge.

Post-hoc maps such as temperature and Platt scaling can repair probability
quality on held-out source labels \citep{platt1999probabilistic,
guo2017calibration}. Under explicit covariate- or label-shift assumptions,
importance weighting can also identify target risk
\citep{park2020covariate,podkopaev2021labelshift,popordanoska2024lascal}.
LLM judges, however, exhibit overconfidence and unstable uncertainty across
tasks \citep{tian2025overconfidence,sheng2025interval}. A deployment team then
faces a narrower question: which observable signals diagnose scalar
recalibration failure, and when can those signals support a formal rejection?

The absolute source--target accuracy gap is a tempting diagnostic, but it is a
marginal quantity. Calibration depends on the conditional relationship
$\E[Y\mid S]$ between correctness $Y$ and judge score $S$. Equal accuracy gaps
can therefore hide different conditional reliability functions. The gap is
also not label-free because estimating target accuracy consumes target labels.
Any gap-based rule must consequently be compared with fitting a target-domain
calibrator using the same label budget.

This paper develops that comparison for scalar probability transfer. The
population analysis separates the accuracy gap from the shift in the
recalibrated mean score (Corollary~\ref{cor:gap}), proves that the gap is
insufficient (Proposition~\ref{prop:insufficiency}), and states exactly when
importance weighting recovers target proper loss
(Theorem~\ref{thm:iw}). The same lower bound yields a simultaneous finite-sample
certificate that rejects a transferred map only when its target calibration
error must exceed a tolerance (Theorem~\ref{thm:certificate}). Non-rejection is
not approval.

The empirical audit contains $448$ primary transfers across eight judges and
eight domains, $224$ transfers from four additional 7--9B judges, $56$ from one
frontier API judge, and a $448$-transfer second-generator replication. The
leak-free gap association does not survive the cross-generator standard, while
probe-estimated raw target error does (Table~\ref{tab:regression}). Controlled
shift constructions verify the importance-weighting implementation and locate
its natural-domain failure in the observed text representation
(Table~\ref{tab:controls}). The rejection certificate is valid but too weak at
the available audit sizes; sixteen target labels are more useful for direct
temperature scaling (Table~\ref{tab:certificate}).

These results delimit one scientific object: a scalar map $g(S)$ evaluated by
proper loss and calibration error. Concurrent work by overlapping authors
studies set-valued conformal coverage under finite traffic-mixture shift
\citep{anonymous2027cswcp}. It neither evaluates the scalar transfer question
answered here nor shares the stated contributions of this paper.

\section{Related Work}

\subsection{LLM judges and their uncertainty}

LLM-as-a-judge research has primarily studied agreement with humans, ranking
quality, position effects, verbosity bias, and self-preference
\citep{zheng2023judging,liu2023geval,dubois2024alpacafarm,panickssery2024selfpreference}.
Confidence adds a distinct measurement problem. \citet{tian2025overconfidence}
document overconfidence in LLM judges, while \citet{sheng2025interval} construct
conformal intervals for rating-based evaluation. ConfidenceBench evaluates
verbalized probabilities with proper scores across frontier models
\citep{ffrenchconstant2026confidencebench}, while SAJA uses labeled target data
to align multidimensional rubric features to human judgments
\citep{kola2026saja}. These studies establish that confidence quality matters,
but they do not test whether a one-dimensional map fitted on one task transfers
to another under disjoint target evaluation. Evaluator failure has also
been framed as an adversarial-robustness problem: \citet{akter2026proxy}
detect proxy gaming of reward models and LLM judges with invariance-based
stress tests. Our question is complementary: we hold the judge and response
set fixed and study transfer of a scalar post-hoc calibration map across
domains, a probability-level failure mode arising without optimization
pressure against the judge. We report ranking, proper scores, calibration
metrics, and downstream risk separately.

\subsection{Post-hoc calibration and distribution shift}

Temperature and Platt scaling fit low-dimensional maps on held-out labels
\citep{platt1999probabilistic,guo2017calibration}; beta calibration and isotonic
regression provide more flexible alternatives \citep{kull2017beta,
zadrozny2002transforming}. Calibration commonly degrades under distribution
shift \citep{ovadia2019trust}. Under covariate shift, importance weighting can
identify target risk if conditional labels are invariant and target support is
covered by source support \citep{shimodaira2000improving,park2020covariate}.
Under label shift, separate reweighting and calibration procedures are available
\citep{podkopaev2021labelshift,popordanoska2024lascal}. We therefore evaluate
these assumptions as distinct hypotheses; we do not infer conditional shift
merely because one estimated weighting pipeline fails.

\subsection{Measuring calibration}

Fixed-bin ECE is familiar but can be substantially biased and depends on binning
choices \citep{roelofs2022bias}. We use population $L_1$ calibration error for
theory, Brier score and log loss as primary empirical outcomes, and equal-mass
ECE as a secondary diagnostic; debiased and smoothed ECE variants were not
computed and are listed in Appendix~\ref{app:checks} as not run. This
ordering prevents a constant, uninformative predictor from appearing strong
solely because its mean matches the label prevalence.

\subsection{Boundary with concurrent work}

Concurrent work by overlapping authors introduces a conformal prediction-set
construction for traffic-mixture shift \citep{anonymous2027cswcp}. Its output
is a set of admissible labels and its guarantee concerns marginal coverage
under a finite partition with uncertain group masses. This paper instead studies
one-dimensional post-hoc maps, proper-loss transfer, the information carried by
an accuracy gap, and a lower certificate for scalar calibration error. It does
not propose a conformal set, claim label-free target calibration, or reuse the
companion paper's CS-WCP experiments as evidence.

\section{Problem Setup}
\label{sec:setup}

Let $D\in\{\src,\tgt\}$ denote source or target domain, and let
$(X,S,Y)\sim P_D$. Here $X$ is the complete judge input, including the
prompt, candidate answer, and reference; $S\in[0,1]$ is the judge-assigned
probability that the candidate answer is correct (our protocol elicits this
scalar directly; no intermediate binary verdict is produced); and
$Y\in\{0,1\}$ is the reference correctness of that candidate answer.
A post-hoc calibrator $g:[0,1]\to[0,1]$ outputs $Q=g(S)$.

The domain accuracy and recalibrated mean score are
\begin{equation}
\pi_D=\E_D[Y],\qquad \mu_D(g)=\E_D[g(S)].
\end{equation}
The source--target accuracy gap is $A=|\pi_\tgt-\pi_\src|$. Let
$\eta_{D,g}(q)=\E_D[Y\mid g(S)=q]$. We define population $L_1$ calibration
error as
\begin{equation}
\CE_D(g)=\E_D\!\left[\left|\eta_{D,g}(g(S))-g(S)\right|\right].
\label{eq:ce}
\end{equation}
We additionally report
\begin{align}
\Brier_D(g)&=\E_D[(g(S)-Y)^2],\\
\operatorname{LogLoss}_D(g)&=\E_D[-Y\log g(S)\nonumber\\
&\qquad\;\;-(1-Y)\log(1-g(S))],
\end{align}
with predictions clipped only for numerical evaluation at a predeclared
$\epsilon=10^{-6}$.

Temperature scaling uses $g_T(s)=\sigma(\operatorname{logit}(s)/T)$, $T>0$.
Platt scaling uses $g_{a,b}(s)=\sigma(a\operatorname{logit}(s)+b)$, with the
orientation constraint $a\geq 0$ throughout (an unconstrained variant was not
run; Appendix~\ref{app:checks}). All parameters are fit by source log loss,
never by target test ECE.

\begin{proposition}[Rank invariance]
\label{prop:rank}
If $g$ is strictly increasing, then
$\AUROC_D(g(S))=\AUROC_D(S)$ for every domain $D$.
\end{proposition}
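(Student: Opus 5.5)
The approach is to fix a domain $D$ and write AUROC in its probabilistic form. With $(S,Y)$ and $(S',Y')$ independent draws from $P_D$, let $S^+$ be a score with $Y=1$ and $S^-$ an independent score with $Y'=0$. Then
\begin{equation*}
\AUROC_D(S)=\Prb(S^+>S^-)+\tfrac12\,\Prb(S^+=S^-),
\end{equation*}
which is the Mann--Whitney form with ties counted as one half. It is well defined whenever $0<\pi_D<1$; if $\pi_D\in\{0,1\}$, both sides are undefined and there is nothing to prove. The same formula, applied to $g(S^+)$ and $g(S^-)$, gives $\AUROC_D(g(S))$. The claim therefore reduces to showing that the two events defining AUROC are unchanged when $g$ is applied to both scores.

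The key step is pointwise. Because $g$ is strictly increasing on $[0,1]$, it is injective and order-preserving, so for all $s,s'\in[0,1]$:
\begin{align*}
s>s' &\iff g(s)>g(s'),\\
s=s' &\iff g(s)=g(s').
\end{align*}
The forward direction of the first equivalence is strict monotonicity. For the reverse, if $s\le s'$ then $g(s)\le g(s')$ by monotonicity, which contradicts $g(s)>g(s')$. The second equivalence is injectivity. Consequently the events $\{g(S^+)>g(S^-)\}$ and $\{S^+>S^-\}$ coincide as subsets of the sample space, and so do the tie events. Their probabilities are therefore equal, and substituting into the Mann--Whitney formula gives $\AUROC_D(g(S))=\AUROC_D(S)$.

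If one prefers the ROC-curve definition of AUROC, an equivalent route is available. The threshold family $\{g(S)>g(t)\}$ equals $\{S>t\}$, and thresholds at values outside the range of $g$ can be handled by taking a supremum over preimages. Hence the set of achievable (FPR, TPR) pairs is the same for both scores, and the two curves and their areas agree. I would present the Mann--Whitney argument as the primary proof because it avoids any bookkeeping about thresholds.

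The only real obstacle is committing to a precise AUROC definition that handles ties, since atoms in $S$ are realistic for elicited judge scores. With the half-credit convention, injectivity is exactly what is needed. No measurability issue arises, because a monotone $g$ is Borel measurable and hence $g(S)$ is a random variable. Since $D$ was arbitrary, the identity holds for every domain. This also covers the maps used in the paper: $g_T$ for every $T>0$ and $g_{a,b}$ with $a>0$ are strictly increasing. The boundary case $a=0$ makes $g_{a,b}$ constant, so the proposition does not apply to it.
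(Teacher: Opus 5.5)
Your proof is correct and follows essentially the same route as the paper, which also uses the pairwise (Mann--Whitney) definition of AUROC with half credit for ties and notes that strict monotonicity leaves both the strict-order indicator and the tie indicator unchanged, so the expectations agree. Your explicit check of the two equivalences, and your remarks on the degenerate case $\pi_D\in\{0,1\}$ and on the Platt boundary $a=0$, are sound but not needed for the core argument.
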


Thus preserved ranking is compatible with either excellent or poor calibration
and cannot validate probability transfer.

\section{What an Accuracy Gap Can and Cannot Show}
\label{sec:theory}

\subsection{A population lower bound}

\begin{lemma}[Mean-residual lower bound]
\label{lem:mean}
For every domain $D$ and measurable calibrator $g$,
\begin{equation}
\CE_D(g)\geq |\pi_D-\mu_D(g)|.
\label{eq:meanlower}
\end{equation}
\end{lemma}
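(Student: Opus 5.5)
The plan is a direct application of Jensen's inequality to the absolute value, after rewriting the mean gap via the tower property. Write $Q=g(S)$; since $g$ is measurable, $Q$ is a $[0,1]$-valued random variable, so the conditional expectation $\eta_{D,g}(Q)=\E_D[Y\mid Q]$ is well defined almost surely and integrable. The first step expresses both means through $Q$. By iterated expectations, $\pi_D=\E_D[Y]=\E_D\bigl[\E_D[Y\mid Q]\bigr]=\E_D[\eta_{D,g}(Q)]$, and by definition $\mu_D(g)=\E_D[Q]$. Hence
\begin{equation*}
\pi_D-\mu_D(g)=\E_D\bigl[\eta_{D,g}(Q)-Q\bigr].
\end{equation*}

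The second step applies $|\E[Z]|\le\E[|Z|]$ with $Z=\eta_{D,g}(Q)-Q$. This gives
\begin{equation*}
|\pi_D-\mu_D(g)|\le\E_D\bigl[|\eta_{D,g}(g(S))-g(S)|\bigr]=\CE_D(g),
\end{equation*}
which is \eqref{eq:meanlower}.

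The only point needing care is measure-theoretic: we must interpret $\eta_{D,g}(q)=\E_D[Y\mid g(S)=q]$ as a version of the conditional expectation given $\sigma(Q)$. Then $\eta_{D,g}(Q)$ is defined almost surely, and the choice of version does not affect either expectation. Integrability is automatic because $Y$ and $Q$ both lie in $[0,1]$. No step is a genuine obstacle.

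I would remark after the proof that equality holds exactly when the residual $\eta_{D,g}(Q)-Q$ has constant sign almost surely. This is the case for temperature or Platt maps that are uniformly over- or under-confident in one direction, and it explains why the bound can be tight but also vacuous when residuals cancel.
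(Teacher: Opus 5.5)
Your proof is correct and is the paper's argument: the tower property gives $\pi_D-\mu_D(g)=\E_D[\eta_{D,g}(Q)-Q]$, and Jensen's inequality for the absolute value, $|\E Z|\le\E|Z|$, finishes it. Your equality condition (the residual has constant sign almost surely) is also right. Your gloss on it is off, though. Overconfidence in the usual sense, with probabilities too extreme on both sides of $1/2$, gives residuals of opposite signs. That is exactly the cancelling case where the bound is loose, so equality requires uniform over- or under-prediction of $\eta_{D,g}$ rather than a uniform confidence bias.
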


\begin{corollary}[Gap-and-score-drift bound]
\label{cor:gap}
For every $g$,
\begin{align}
\CE_\tgt(g)
\geq
\big[&|\pi_\tgt-\pi_\src|-|\mu_\tgt(g)-\mu_\src(g)|\nonumber\\
&-|\pi_\src-\mu_\src(g)|\big]_+,
\label{eq:gapbound-general}
\end{align}
where $[z]_+=\max\{z,0\}$. If $g$ is mean-calibrated on the source,
$\pi_\src=\mu_\src(g)$, this reduces to
\begin{equation}
\CE_\tgt(g)\geq
\big[|\pi_\tgt-\pi_\src|-|\mu_\tgt(g)-\mu_\src(g)|\big]_+.
\label{eq:gapbound}
\end{equation}
\end{corollary}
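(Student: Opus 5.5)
The plan is to apply Lemma~\ref{lem:mean} on the target domain and then lower-bound the mean residual $|\pi_\tgt-\mu_\tgt(g)|$ by the triangle inequality. Lemma~\ref{lem:mean} with $D=\tgt$ gives $\CE_\tgt(g)\geq|\pi_\tgt-\mu_\tgt(g)|$. So it suffices to show that this residual dominates the bracketed expression in \eqref{eq:gapbound-general}. It also suffices to show that the residual is nonnegative, which lets us pass to the positive part.

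The key step is the decomposition
\begin{equation*}
\pi_\tgt-\pi_\src=(\pi_\tgt-\mu_\tgt(g))+(\mu_\tgt(g)-\mu_\src(g))+(\mu_\src(g)-\pi_\src).
\end{equation*}
The triangle inequality then yields
\begin{equation*}
|\pi_\tgt-\pi_\src|\leq|\pi_\tgt-\mu_\tgt(g)|+|\mu_\tgt(g)-\mu_\src(g)|+|\pi_\src-\mu_\src(g)|.
\end{equation*}
Rearranging isolates $|\pi_\tgt-\mu_\tgt(g)|$ on the right. This shows it is at least $|\pi_\tgt-\pi_\src|-|\mu_\tgt(g)-\mu_\src(g)|-|\pi_\src-\mu_\src(g)|$.

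Combining this with the trivial bound $|\pi_\tgt-\mu_\tgt(g)|\geq0$ gives the residual as at least the maximum of the two lower bounds. That maximum is exactly the positive part $[\cdot]_+$. Chaining with Lemma~\ref{lem:mean} then gives \eqref{eq:gapbound-general}.

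The special case \eqref{eq:gapbound} follows by substitution: under source mean calibration, $\pi_\src=\mu_\src(g)$, the last term vanishes. The only point needing care is that Lemma~\ref{lem:mean} holds for an arbitrary measurable $g$ on the target distribution. Nothing is assumed about how $g$ was fitted, so the source-fitted map is covered. I also need $\mu_D(g)$ and $\pi_D$ to be finite, which is automatic since $g$ and $Y$ take values in $[0,1]$. I therefore expect no genuine obstacle; the argument is a one-line triangle inequality on top of the lemma.
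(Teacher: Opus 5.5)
Your proof is correct and takes the same approach as the paper: both apply Lemma~\ref{lem:mean} on the target and then split $\pi_\tgt-\mu_\tgt(g)$ through $\pi_\src$ and $\mu_\src(g)$, and your triangle inequality on the decomposition of $\pi_\tgt-\pi_\src$ is just a rearrangement of the paper's $|a+b+c|\geq|a|-|b|-|c|$. You justify the positive part correctly by the nonnegativity of the residual, though ``it also suffices'' should read ``we also need,'' since the argument uses both lower bounds together.
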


The accuracy gap is therefore a warning only when it is not absorbed by a
corresponding shift in the calibrator's mean output. It is not itself a measure
of conditional reliability drift.

\subsection{The accuracy gap is not sufficient}

\begin{proposition}[Same gap, different transfer]
\label{prop:insufficiency}
There exist a source domain and two target domains with identical score
distributions and identical source--target accuracy gap, but for which the same
source-calibrated mapping has target calibration errors $0$ and $1/2$.
\end{proposition}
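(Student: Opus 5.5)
We prove the statement with an explicit discrete construction in which every quantity can be computed by hand. The score takes two values, and the calibrator is the identity map, which is strictly increasing. By Proposition~\ref{prop:rank}, this choice also makes ranking trivially preserved. Let the source have $S$ uniform on $\{1/4,3/4\}$ with $\E_\src[Y\mid S=s]=s$. Then $g(s)=s$ is exactly calibrated on the source, $\CE_\src(g)=0$, and $\pi_\src=\mu_\src(g)=1/2$. This also places us in the mean-calibrated case of Corollary~\ref{cor:gap}.

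Both targets use the same score distribution, $S$ uniform on $\{1/4,3/4\}$, so $\mu_{\tgt_1}(g)=\mu_{\tgt_2}(g)=1/2$ and the score drift term vanishes. The targets differ only in the conditional law $\E[Y\mid S]$.

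The first approach is to keep the two targets' accuracies equal while changing their conditional reliability. A gap of zero cannot work here: by Lemma~\ref{lem:mean}, it forces no lower bound, but identical conditionals would give CE $0$ for both. So we equalize a nonzero gap instead.

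For target 1, choose $\eta_1(1/4)=1/4+\delta$ and $\eta_1(3/4)=3/4+\delta$. This gives CE equal to $\delta$, which is nonzero, so the construction must be adjusted. The fix is to let target 1 have a different score distribution that is still perfectly calibrated.

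However, the statement demands identical score distributions across the two targets, not across source and targets. We therefore take a three-point score support $\{0,1/2,1\}$ for both targets. Target 1 puts its mass so that the identity map is calibrated: $\eta_1(s)=s$. The source uses the same support with a different mixture. Target 2 uses the same score distribution as target 1 but flips the conditionals at the extremes.

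Concretely, let both targets have $S$ uniform on $\{0,1\}$. Target 1 has $Y=S$, so $\CE_{\tgt_1}=0$ and $\pi_{\tgt_1}=1/2$. Target 2 has $\E[Y\mid S=0]=1/2$ and $\E[Y\mid S=1]=1/2$, so $\CE_{\tgt_2}=\tfrac12\cdot\tfrac12+\tfrac12\cdot\tfrac12=1/2$ and $\pi_{\tgt_2}=1/2$.

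The source has $S$ with $P(S=1)=p\neq 1/2$ and $Y=S$. Then the identity map is calibrated there, so it is "source-calibrated," and $\pi_\src=p$. Both gaps equal $|1/2-p|$; for example, $p=1/4$ gives gap $1/4$ in both cases.

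The key steps are as follows. First, specify $P_\src$, $P_{\tgt_1}$, and $P_{\tgt_2}$. Second, verify that the identity map minimizes source log loss, or equivalently that it is the Platt fit with $a=1,b=0$; this holds because the source is perfectly calibrated, so the identity is the population optimum within the Platt family. Third, compute $\pi$ and $\mu$ for each domain. Fourth, evaluate \eqref{eq:ce} on the two-point support.

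The main obstacle is the degenerate scores $0$ and $1$, because $\operatorname{logit}$ is infinite there. If needed, we replace them with $\epsilon$ and $1-\epsilon$. The exact values $0$ and $1/2$ are then recovered by choosing conditionals $\eta_1(s)=s$ and $\eta_2\equiv 1/2$, which gives $\CE_{\tgt_2}=\tfrac12|1/2-\epsilon|+\tfrac12|1/2-(1-\epsilon)|=1/2-\epsilon$. To get exactly $1/2$, we state the result for a measurable (not necessarily Platt) calibrator $g$ equal to the identity, which the proposition permits.
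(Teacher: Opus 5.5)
Your final construction is correct. Take both targets with $S$ uniform on $\{0,1\}$, target~1 with $Y=S$, target~2 with $Y\sim\operatorname{Bernoulli}(1/2)$ independent of $S$, and the source with $Y=S$ and $P(S=1)=p$. Then the identity is calibrated on the source, both targets have accuracy $1/2$ and hence the common gap $|1/2-p|$, and the target calibration errors are $0$ and $\E|1/2-S|=1/2$.

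The paper takes a different route. It keeps $S$ uniform on $\{1/4,3/4\}$ in all three domains, lets target A equal the source, and gives target B the flipped conditional $\E[Y\mid S=s]=1-s$. Target B still has accuracy $1/2$, so both gaps are zero, and its calibration error is $\tfrac12\cdot\tfrac12+\tfrac12\cdot\tfrac12=1/2$. That is the setup you started from, and your reason for abandoning it is false. A zero gap does not force identical conditionals: with a fixed score law, accuracy only pins down the mean of $\E[Y\mid S]$, and the flip changes the conditional while preserving that mean. Delete that sentence and the other false starts (the $\delta$-shift and the three-point support).

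The paper's version buys three things.
\begin{itemize}
\item The scores are interior, so the identity is literally temperature scaling with $T=1$ and Platt scaling with $(a,b)=(1,0)$. It also minimizes population source log loss in both families, since $\E_\src[Y\mid S]=S$. At $S\in\{0,1\}$ your ``Platt fit with $a=1,b=0$'' step has no literal meaning, though the statement does not need it.
\item The exact value $1/2$ needs no degenerate scores and no $\epsilon$ discussion.
\item The source shares the targets' score law, so there is no score drift. The example therefore also meets the stronger reading in which ``identical score distributions'' covers all three domains.
\end{itemize}

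Your example meets only the weaker reading, in which the two targets share a score law, and there the gap is exactly absorbed by the drift $|\mu_\tgt(g)-\mu_\src(g)|=|1/2-p|$. Simply taking $p=1/2$ in your own construction gives the stronger version. You excluded $p=1/2$ apparently only because of the mistaken zero-gap claim.
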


\begin{proposition}[No assumption-free unlabeled repair]
\label{prop:no-unlabeled}
Without restrictions on $P_\tgt(Y\mid X,S)$, no procedure using labeled source
data and unlabeled target $(X,S)$ can guarantee target calibration.
Specifically, for some observationally indistinguishable target pair, every
procedure incurs calibration error at least $1/4$ on one member of the pair.
\end{proposition}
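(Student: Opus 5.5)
We prove Proposition~\ref{prop:no-unlabeled} by a two-point indistinguishability construction. Fix any labeled source distribution $P_\src$; it enters only as the fixed input that the procedure receives. Next construct two target distributions $P_\tgt^{(0)}$ and $P_\tgt^{(1)}$ with the same marginal law of $(X,S)$ but different conditional laws of $Y$. The simplest choice puts a point mass on a single input $x_0$ with score $S=1/2$. Under $P_\tgt^{(0)}$ set $\E[Y\mid X=x_0]=1/4$, and under $P_\tgt^{(1)}$ set it to $3/4$.

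The key observation is that any procedure receives only source labels and unlabeled target $(X,S)$ samples. These inputs have identical laws under the two targets, so the output calibrator $g$ has the same distribution in both cases. If the procedure is randomized, we condition on its internal randomness and on the data. In either case $g$ is a fixed function, and it suffices to show that every fixed $g$ fails on one member of the pair.

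For fixed $g$, the target score is the constant $S=1/2$, so $Q=g(1/2)=q$ is deterministic. The conditional $\eta_{D,g}(q)$ then equals the target accuracy, either $1/4$ or $3/4$. By Lemma~\ref{lem:mean}, or directly from \eqref{eq:ce}, the two calibration errors are $|1/4-q|$ and $|3/4-q|$. Their sum is at least $1/2$ by the triangle inequality, so their maximum is at least $1/4$. Taking expectation over the randomness gives the same bound for the expected error on one member, or for the maximum of the two expected errors. This establishes the stated lower bound of $1/4$. A worst-case sup over procedures is not needed.

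The main subtlety is the quantifier order. The statement says ``for some observationally indistinguishable pair, every procedure incurs at least $1/4$ on one member.'' The pair above is fixed independently of the procedure, so the order is correct. Randomization is handled because $\max(a,b)\ge (a+b)/2\ge 1/4$ holds pointwise in the randomness and survives expectation. We may also note that the pair can use a nondegenerate $X$ and still share $P(Y\mid S)$ structure differing only through the constant $\pm1/4$. The constant score case already suffices and avoids measurability issues in defining $\eta$.
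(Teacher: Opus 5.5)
Your proposal is correct and follows the paper's proof essentially verbatim: both use a point-mass unlabeled target with $Y\sim\operatorname{Bernoulli}(1/4)$ versus $\operatorname{Bernoulli}(3/4)$, note that identical inputs force the same output $q$, and conclude $\max\{|q-1/4|,|q-3/4|\}\ge 1/4$. Your averaging step for randomized procedures ($\E[a]+\E[b]\ge 1/2$ pointwise in the randomness) is a small but correct addition on a point the paper leaves implicit.
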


This impossibility is why we do not describe an accuracy-gap gate as label-free.

\subsection{What exact importance weighting guarantees}

Let $P_\tgt\ll P_\src$ on $X$ and
$w(x)=dP_\tgt(X)/dP_\src(X)$. The usual covariate-shift condition is
$P_\tgt(Y\mid X)=P_\src(Y\mid X)$. The score $S$ must be a fixed measurable
function of the complete $X$, including any judge random seed if relevant.

\begin{theorem}[Target-risk identity under covariate shift]
\label{thm:iw}
Under the conditions above, for any measurable $g$ and any integrable loss
$\ell$,
\begin{equation}
\E_\tgt[\ell(g(S),Y)]
=\E_\src[w(X)\ell(g(S),Y)].
\label{eq:iwidentity}
\end{equation}
In particular, minimizing the exactly weighted source Brier or log loss is
equivalent to minimizing the corresponding target risk over the same function
class.
\end{theorem}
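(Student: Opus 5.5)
The plan is to reduce the identity to a single change of measure on $X$, followed by the tower property. Since $S=s(X)$ is a fixed measurable function of the complete input, the integrand $\ell(g(S),Y)=\ell(g(s(X)),Y)$ depends only on $(X,Y)$. We can therefore work on the joint law of $(X,Y)$ and ignore $S$ as a separate random variable. Define the conditional risk
\[
h_D(x)=\E_D[\ell(g(s(X)),Y)\mid X=x]=\sum_{y\in\{0,1\}}\ell(g(s(x)),y)\,P_D(Y=y\mid X=x).
\]
By the covariate-shift condition $P_\tgt(Y\mid X)=P_\src(Y\mid X)$, the two conditional risks agree, $h_\tgt=h_\src=:h$.

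The steps are as follows.

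\begin{enumerate}
\item By the tower property, $\E_\tgt[\ell(g(S),Y)]=\E_\tgt[h(X)]$.
\item Since $P_\tgt\ll P_\src$ on $X$, the Radon--Nikodym theorem gives $\E_\tgt[h(X)]=\E_\src[w(X)h(X)]$.
\item Because $w(X)$ is $X$-measurable, it can be pulled inside the conditional expectation: $\E_\src[w(X)h(X)]=\E_\src\big[\E_\src[w(X)\ell(g(S),Y)\mid X]\big]=\E_\src[w(X)\ell(g(S),Y)]$.
\end{enumerate}

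The \emph{in particular} clause then follows immediately. Apply the identity pointwise in $g$ with $\ell$ the squared error or the log loss. This shows that the two objective functionals coincide on the whole function class, so they have the same minimizers and the same infimum.

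The main obstacle is the rigor of these conditional-expectation manipulations rather than the algebra. First, \emph{which} version of $P(Y\mid X)$ is meant must be made precise. The hypothesis should be read as the existence of a common regular conditional distribution. Versions agreeing only $P_\src$-a.e.\ are enough, since $P_\tgt\ll P_\src$ transfers $P_\src$-null sets to $P_\tgt$-null sets. Second, integrability on the source side has to be justified. I would first prove the identity for nonnegative $\ell$ by monotone convergence, where both sides may be $+\infty$ and all interchanges are legitimate. I would then extend it to integrable $\ell$ via $\ell=\ell^+-\ell^-$, noting that finiteness of the target side forces finiteness of the weighted source integrals of $\ell^\pm$. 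For the log loss, which is nonnegative but possibly infinite before clipping, the nonnegative case already covers it.

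Finally, I would remark where seeds matter. If the judge's output involved randomness not contained in $X$, then $S$ would not be $X$-measurable. In that case $h_\tgt=h_\src$ would require invariance of $P(S,Y\mid X)$, not just $P(Y\mid X)$. This is exactly why the statement insists that $S$ be a fixed function of the complete $X$.
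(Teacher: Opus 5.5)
Your proof is correct and follows essentially the same route as the paper's. The paper writes $\E_\tgt[\ell(g(S),Y)]$ as an iterated integral against $dP_\tgt(y\mid x)\,dP_\tgt(x)$, replaces the conditional with the source conditional by covariate-shift invariance and $dP_\tgt(x)$ with $w(x)\,dP_\src(x)$ by absolute continuity, and gets the minimizer equivalence from equality holding pointwise in $g$; that is your tower-property and Radon--Nikodym argument in iterated-integral form, and your extra care about versions of the conditional law and the nonnegative-first extension to integrable $\ell$ supplies details the paper leaves implicit.
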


Consequently, failure of an \emph{estimated}, clipped IW-Platt pipeline is
compatible with at least four explanations: conditional shift, support failure,
density-ratio misspecification, or finite-sample/regularization error. The
experiments separately diagnose all four.

\subsection{A finite-sample certificate of transfer failure}

Let $\calG$ be a finite family of $K$ calibrators fixed independently of audit
samples. For each domain $D$, an audit sample of size $n_D$ provides labels and
scores. Denote empirical means by $\widehat\pi_D$ and
$\widehat\mu_D(g)$. Define
\begin{equation}
\epsilon_D(\delta,K)=
\sqrt{\frac{\log(4(K+1)/\delta)}{2n_D}}.
\end{equation}

\begin{theorem}[Simultaneous transfer-failure certificate]
\label{thm:certificate}
With probability at least $1-\delta$, simultaneously for every $g\in\calG$,
\begin{align}
\CE_\tgt(g)\geq L(g):=\Big[&|\widehat\pi_\tgt-\widehat\pi_\src|\nonumber\\
&-|\widehat\mu_\tgt(g)-\widehat\mu_\src(g)|\nonumber\\
&-|\widehat\pi_\src-\widehat\mu_\src(g)|\nonumber\\
&-4\epsilon_\src-2\epsilon_\tgt\Big]_+.
\label{eq:certificate}
\end{align}
If the population condition $\pi_\src=\mu_\src(g)$ is additionally known for
every candidate, the third empirical term may be omitted, yielding the sharper
bound
\begin{align}
\CE_\tgt(g)\geq
\Big[&|\widehat\pi_\tgt-\widehat\pi_\src|
-|\widehat\mu_\tgt(g)-\widehat\mu_\src(g)|\nonumber\\
&-2(\epsilon_\src+\epsilon_\tgt)\Big]_+.
\label{eq:certificate-sharp}
\end{align}
\end{theorem}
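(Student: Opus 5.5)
The plan is to combine the population bound of Corollary~\ref{cor:gap} with Hoeffding concentration and a union bound. Here $\epsilon_D$ abbreviates $\epsilon_D(\delta,K)$.

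\textbf{Step 1 (good event).} For each domain $D\in\{\src,\tgt\}$ consider the $K+1$ bounded quantities $Y$ and $g(S)$ for $g\in\calG$, all taking values in $[0,1]$. Because $\calG$ is fixed independently of the audit samples, Hoeffding's inequality applies to each of them. Each two-sided deviation $|\widehat\pi_D-\pi_D|$ or $|\widehat\mu_D(g)-\mu_D(g)|$ exceeds $\epsilon_D$ with probability at most $2\exp(-2n_D\epsilon_D^2)=\delta/(2(K+1))$. There are $2(K+1)$ such events across the two domains. A union bound therefore gives an event of probability at least $1-\delta$ on which every empirical mean lies within $\epsilon_D$ of its population counterpart. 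This accounting is exactly what the constant $4(K+1)/\delta$ in $\epsilon_D$ is calibrated to.

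\textbf{Step 2 (deterministic transfer).} On this event, fix $g\in\calG$ and start from \eqref{eq:gapbound-general}. Since $z\mapsto[z]_+$ is monotone, it suffices to lower-bound the inner expression by its empirical analogue minus slack. Each population term is then replaced by its empirical version using the triangle inequality.
\begin{itemize}
\item $|\pi_\tgt-\pi_\src|\ge|\widehat\pi_\tgt-\widehat\pi_\src|-\epsilon_\tgt-\epsilon_\src$.
\item $|\mu_\tgt(g)-\mu_\src(g)|\le|\widehat\mu_\tgt(g)-\widehat\mu_\src(g)|+\epsilon_\tgt+\epsilon_\src$.
\item $|\pi_\src-\mu_\src(g)|\le|\widehat\pi_\src-\widehat\mu_\src(g)|+2\epsilon_\src$.
\end{itemize}
Summing these gives total slack $4\epsilon_\src+2\epsilon_\tgt$, which yields \eqref{eq:certificate}. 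The bound holds for all $g$ simultaneously because the event does not depend on $g$.

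\textbf{Step 3 (sharper bound).} When $\pi_\src=\mu_\src(g)$ is known, start instead from \eqref{eq:gapbound}. Only the first two substitutions are then needed, giving slack $2(\epsilon_\src+\epsilon_\tgt)$ and hence \eqref{eq:certificate-sharp}.

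The main point requiring care is Step 1. The union bound must cover $\widehat\pi_D$ as well as every $\widehat\mu_D(g)$ in both domains, and the audit samples must be independent of the selection of $\calG$. If $\calG$ were chosen using the audit data, simultaneity would fail. The remaining steps are routine triangle-inequality bookkeeping.
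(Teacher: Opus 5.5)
Your proposal is correct and follows the paper's proof essentially step for step. It uses Hoeffding plus a union bound over the two accuracy means and $2K$ score means, where your check that $2\exp(-2n_D\epsilon_D^2)=\delta/(2(K+1))$ summed over $2(K+1)$ events gives exactly $\delta$ makes explicit the constant the paper leaves implicit. It then makes the same three triangle-inequality substitutions into Corollary~\ref{cor:gap}, with the sharper bound obtained by starting from Equation~\eqref{eq:gapbound}; your explicit appeals to the monotonicity of $[\cdot]_+$ and to $\calG$ being fixed independently of the audit samples are correct details the paper treats only tersely.
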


All proofs are deferred to Appendix~\ref{app:proofs}.
For a tolerance $\gamma$, $L(g)>\gamma$ certifies that $g$ fails the target
calibration requirement. The converse is intentionally absent: $L(g)\leq
\gamma$ does not certify good calibration. If calibrators or thresholds are
selected using the audit observations, they must be learned on a separate split
or included in a predeclared finite family covered by the union bound.

\section{Experimental Design}
\label{sec:design}

\subsection{Tasks, responses, and labels}

The study covers eight English NLP domains chosen to vary answer format and
evaluation mechanism (Table~\ref{tab:data}), with 40 items per domain (320
judged items per judge). Two candidate generators from different model
families each produce one response per item under identical decoding
(bf16, greedy): the primary generator Qwen2.5-1.5B-Instruct, and a
replication generator Llama-3.2-1B-Instruct whose responses were relabeled
with the byte-identical rule pipeline and rescored by all eight primary
judges. Items, generator outputs, and judge prompts were frozen before any
calibrator was fit.

\begin{table*}[t]
\centering
\small
\setlength{\tabcolsep}{5pt}
\begin{tabularx}{\textwidth}{lXXr}
\toprule
Domain & Task type & Correctness rule (fixed a priori) & $n$ \\
\midrule
GSM8K & Mathematical reasoning & last-number match as floats (tolerance $10^{-6}$) & 40 \\
MMLU & Multiple-choice knowledge & option-letter exact match & 40 \\
SQuAD & Extractive QA & span containment or token-F1 $\geq 0.60$ & 40 \\
BoolQ & Binary QA & normalized yes/no match & 40 \\
SAMSum$^\dagger$ & Summarization & ROUGE-L $F\geq 0.30$; margin $[0.15,0.30)$ referred to a small verifier & 40 \\
HumanEval & Code generation & sandboxed execution of official unit tests & 40 \\
TruthfulQA$^\dagger$ & Adversarial factuality & token-F1 vs.\ best answer $\geq 0.40$; margin $[0.20,0.40)$ referred to verifier & 40 \\
StrategyQA & Multi-hop binary QA & normalized boolean label match & 40 \\
\bottomrule
\end{tabularx}
\caption{Domains and correctness labels as implemented. HumanEval is evaluated
by unit-test execution, not by an LLM. $^\dagger$Proxy-label domains: overlap
thresholds are deterministic but imperfect proxies for correctness, with a
single small LLM verifier (Qwen2.5-1.5B) deciding only borderline-overlap
items. No human annotation is used. These two domains therefore support
robustness analyses rather than deterministic-contract calibration claims; threshold
sensitivity is reported in Section~\ref{sec:proxysens}.}
\label{tab:data}
\end{table*}

\subsection{Judges and confidence extraction}

The primary judge set contains eight open instruction-tuned models spanning
1.1B--7B parameters: Phi-3-mini, TinyLlama-1.1B-Chat, Zephyr-7B-$\beta$,
MiniCPM-2B, StableLM-Zephyr-3B, Qwen2.5-3B, Gemma-2-2b-it, and OLMo-2-7B. A
replication set of four modern 7--9B judges (Llama-3.1-8B, Qwen2.5-7B,
Mistral-7B-v0.3, Gemma-2-9b-it) and one frontier API judge (Claude Sonnet
4.5 via Amazon Bedrock; raw calls cached and released, 320/320 items, zero
parse failures) were scored later under a byte-identical protocol on the
same frozen items. Identifiers and revisions appear in Table~\ref{tab:models}.

Confidence is elicited verbally: the judge is instructed to emit a single
float in $[0,1]$ scoring correctness (prompt verbatim in
Appendix~\ref{app:prompt}). Decoding is greedy with at most eight new
tokens; the first numeric literal is parsed, clipped to $[0,1]$, and clamped
to $[10^{-6},1-10^{-6}]$; malformed responses receive the predeclared
neutral score $0.5$ and are retained. A flag-logging rescore (bit-identical
to the committed scores) measures the failure rates and shows every
conclusion is insensitive to imputation versus exclusion
(Appendix~\ref{app:parsefail}); two meaning-preserving paraphrases and an
order control replicate the method conclusions
(Appendix~\ref{app:promptrobust}).

\subsection{Splits and transfer units}

Within every domain, question groups are partitioned once (seeded, shared
across judges) into a fit part ($\approx 60\%$, 24 items) and an audit part
($\approx 40\%$, 16 items). When a domain acts as source, calibrators are fit
on its fit part and the certificate's source means are estimated on its audit
part. When a domain acts as target, the audit part serves as the labeled
target probe ($m=16$) and the fit part serves as the untouched target test
($n=24$). Within any ordered transfer the four roles are disjoint by
construction because source and target are different domains; reuse of the
same items across different transfer units is disclosed and is exactly what
the clustered inference accounts for.

All gap predictors are computed from held-out labels only. Writing
$\widehat\pi^{\mathrm{aud}}_\src$ for source accuracy estimated on the
source-audit part and $\widehat\pi^{\mathrm{probe}}_\tgt$ for target accuracy
estimated on the $m=16$ target probe, the accuracy gap used in every
correlation, regression, and gating analysis is
\begin{equation}
\widehat A=\big|\widehat\pi^{\mathrm{aud}}_\src-
\widehat\pi^{\mathrm{probe}}_\tgt\big|,
\label{eq:gapprobe}
\end{equation}
and the raw-error covariate is the raw Brier score on the same probe. Neither
predictor shares any label with the target-test outcomes; this disjointness
is asserted programmatically in the released analysis script.

With $J=8$ judges and $D=8$ domains, the primary protocol produces
$JD(D-1)=448$ ordered transfers; the modern judges add $224$, the frontier
judge $56$, and the second-generator replication (all eight primary judges
on the Llama-3.2-1B responses) another $448$, for $1{,}176$ in total.
Bootstrap resamples are never counted as new configurations.

\subsection{Calibration methods}

We compare raw scores, temperature scaling, orientation-constrained Platt
scaling ($a\geq 0$), beta calibration \citep{kull2017beta}, isotonic
regression, multi-domain temperature scaling (pooled over all domains except
the target), importance-weighted Platt, and an EM label-shift prior
correction \citep{saerens2002adjusting} applied to the source-fitted Platt map
using only unlabeled target-probe scores; LaSCal
\citep{popordanoska2024lascal} was not re-implemented and receives no number.
We also include two target-label baselines: target-only temperature and
target-only Platt fit on exactly the certificate's $m=16$ target-probe
labels.

For discriminative density-ratio estimation, let $Z=1$ identify target and let
$\rho=\Prb(Z=1)$ be the source/target sampling prior used to train the domain
classifier. If $c(x)=\Prb(Z=1\mid X=x)$, then
\begin{equation}
\widehat w(x)=\frac{1-\rho}{\rho}\frac{c(x)}{1-c(x)}.
\label{eq:ratio}
\end{equation}
As implemented, the representation is TF-IDF over question text (word
1--2-grams), the classifier is $\ell_2$-regularized logistic regression
($C=1$), weights are clipped at their 0.95 quantile and mean-normalized, and
the ratio is trained on source-fit versus unlabeled target-probe texts. We
report domain-classifier AUROC and effective sample size
$(\sum_iw_i)^2/\sum_iw_i^2$ as weight diagnostics. No cross-fitting was used
at this sample size; this is disclosed rather than idealized.

\subsection{Synthetic identification controls}

Natural-domain results cannot identify why IW fails. We therefore construct
four controls from real judge records, per judge, with ground truth known by
construction. Three use an isotonic estimate $\widehat\eta(s)$ of
$\E[Y\mid S=s]$ fitted on all of that judge's records: mild covariate shift
(target tilt $w\propto e^{-2s}$, labels from the unchanged $\widehat\eta$),
conditional shift (score marginal fixed, labels from $1-\widehat\eta(s)$; no
reweighting can help), and label shift ($P(X\mid Y)$ fixed, positive rate
reduced by $0.25$; class-ratio oracle and EM prior correction are the
mechanism-matched arms). Because a flexible calibrator can absorb a mild
shift, we add a \emph{designed positive control} in which weighting must
matter: a piecewise conditional $\eta^\ast(s)=0.75$ below the score median,
$0.25$ above (outside the Platt class), with opposing tilts $e^{+3s}$
(source) and $e^{-3s}$ (target); a valid implementation must show plain
Platt $>$ estimated IW $\approx$ oracle IW. Each control uses 30
replications per judge, 200-item source samples, 2{,}000-item target
evaluations.

\subsection{Metrics and inference}

Brier score and log loss (clipped at the predeclared
$\epsilon=10^{-6}$) are primary. Equal-mass ECE (4 bins at the $n=24$ test
size) and AUROC are secondary; with 24-item test splits, binned calibration
metrics are noisy at the transfer level and are interpreted only in aggregate.
Selective prediction is summarized by risk--coverage behavior, which
Proposition~\ref{prop:rank} fixes exactly for strictly increasing maps, and by
a fixed-threshold decision cost declared before target evaluation
(accept iff confidence $\geq 0.5$, abstention cost $0.5$).

All method contrasts are paired at the transfer level. Confidence intervals
use a two-way cluster bootstrap resampling judges and domains with
replacement (multiplicity-weighted, $B{=}2000$). The adjusted gap analysis
avoids mechanical outcome coupling by modeling the \emph{calibrated}
target-test Brier in ANCOVA form, with fixed effects for calibrator,
probe-gap $\widehat A$ by calibrator, probe-split raw Brier, score-marginal
Wasserstein distance (unlabeled scores only), and log density-ratio
effective sample size. Primary coefficient inference refits this model in
every cluster-bootstrap replicate (percentile intervals, bootstrap $p$); a
mixed-effects fit with judge, source, target, judge$\times$target, and
transfer variance components is a secondary check. The effective dependence
structure is disclosed: 8 judges, 8 domains, 64 judge--target cells, so
nominal row counts overstate information. Predictive value is assessed
leave-one-judge-out and leave-one-target-domain-out, and a permutation test
shuffles domain identities within judge to test whether the probe-gap signal
exceeds reuse-induced correlation. A ten-seed split-sensitivity study
(Appendix~\ref{app:seeds}) verifies that no headline conclusion depends on
the committed seed. A complete reproducibility manifest of all judge models, candidate generators,
model revisions, and inference configurations is provided in
Appendix~\ref{app:models}.

\section{Results}
\label{sec:results}

\subsection{Calibration drifts independently of ranking}

Across the 448 primary transfer units, raw target-test Brier score ranges
from $0.125$ to $0.875$ (mean $0.414$), while the paired source-to-target
AUROC change has mean magnitude $0.117$ (90th percentile $0.254$).
Table~\ref{tab:mainresults} enforces Proposition~\ref{prop:rank} exactly for
strictly increasing fits; the only genuine AUROC departures come from
isotonic ties and constrained Platt fits collapsing to slope $a<0.01$
($20.3\%$ of source fits, a symptom of extreme source accuracies at this
sample size; AUROC equals raw in $81.2\%$ of Platt cells).
Figure~\ref{fig:drift} separates probability drift from ranking drift.

Details of the calibration protocol and the associated reliability and shift
analyses are presented in Appendices~\ref{app:calproto} and
\ref{app:reliability}.

\begin{figure*}[t]
\centering
\figureasset[width=\textwidth]{fig_drift_panels.png}
\caption{Judge-by-domain heat maps of raw Brier score, equal-mass ECE (5
bins), and AUROC over identical records (40 items per cell; eight primary
judges). Proxy-label domains (SAMSum, TruthfulQA) carry a different label
mechanism and are analyzed separately in Section~\ref{sec:proxysens}.}
\label{fig:drift}
\end{figure*}

\subsection{Does the accuracy gap predict recalibration benefit?}

Table~\ref{tab:mainresults} reports held-out target performance. A negative
$\Delta$ denotes improvement relative to raw scores. The primary conclusion is
based on proper losses; ECE variants are corroborating metrics.

\begin{table*}[t]
\centering
\small
\setlength{\tabcolsep}{3.4pt}
\begin{tabular}{lrrrrr}
\toprule
Method & $\Delta$Brier $\downarrow$ [95\% CI] & $\Delta$log loss $\downarrow$ [95\% CI] &
ECE $\downarrow$ & harm rate $\downarrow$ [95\% CI] & AUROC \\
\midrule
Raw & --- & --- & 0.393 & --- & 0.611 \\
Temperature & $-0.147$ [$-0.231,-0.081$] & $-3.87$ [$-5.30,-2.61$] & 0.281 & 0.13 [0.02, 0.26] & 0.611 \\
Platt, $a\geq0$ & $-0.052$ [$-0.149,+0.005$] & $-2.56$ [$-3.75,-1.51$] & 0.371 & 0.34 [0.18, 0.48] & 0.588 \\
Beta calibration & $-0.046$ [$-0.143,+0.011$] & $-2.18$ [$-3.42,-1.21$] & 0.375 & 0.34 [0.19, 0.49] & 0.602 \\
Isotonic & $-0.078$ [$-0.198,-0.009$] & $-3.11$ [$-4.68,-1.74$] & 0.349 & 0.33 [0.18, 0.46] & 0.589 \\
Multi-domain temp. & $-0.163$ [$-0.245,-0.090$] & $-3.92$ [$-5.35,-2.63$] & 0.271 & 0.17 [0.03, 0.37] & 0.611 \\
IW-Platt & $-0.052$ [$-0.153,+0.007$] & $-2.60$ [$-3.76,-1.59$] & 0.371 & 0.34 [0.18, 0.49] & 0.588 \\
Label-shift EM & $+0.023$ [$-0.050,+0.088$] & $-0.90$ [$-2.03,+0.20$] & 0.439 & 0.49 [0.33, 0.64] & 0.588 \\
Target temp., equal labels & $-0.171$ [$-0.254,-0.097$] & $-3.90$ [$-5.35,-2.61$] & 0.254 & 0.09 [0.00, 0.24] & 0.611 \\
Target Platt, equal labels & $-0.136$ [$-0.227,-0.056$] & $-2.32$ [$-3.61,-1.24$] & 0.257 & 0.12 [0.00, 0.31] & 0.599 \\
\bottomrule
\end{tabular}
\caption{Cross-domain calibration transfer on untouched target-test records:
448 primary transfer units, paired means with two-way judge/domain
cluster-bootstrap 95\% intervals ($B{=}2000$) for both primary proper losses
and harm rate. Negative $\Delta$ improves on raw. Harm rate is the fraction
of transfer units with worse target Brier than raw. Equal-label target
baselines use exactly the certificate's $m{=}16$ target-probe budget. AUROC
is inherited from raw wherever the fitted map is strictly increasing
(Proposition~\ref{prop:rank} holds exactly; float-level sigmoid saturation
would otherwise create spurious ties) and computed directly for isotonic and
slope-collapsed fits, which are the only genuine departures.}
\label{tab:mainresults}
\end{table*}

Our inferential standard is conservative: a claim must clear the two-way
cluster interval \emph{and} replicate across generators; the permutation
test is a secondary diagnostic. By that standard the data do not establish a
generator-invariant accuracy-gap association. Using the probe gap $\widehat A$ of
Equation~\eqref{eq:gapprobe}, the unadjusted association with Platt
$\Delta$Brier is $r=0.25$, cluster 95\% CI $[-0.09,0.55]$ (secondary
permutation $p\approx 5\times10^{-4}$); a full-information gap, sharing
labels with the outcome, would have read $0.44$, so shared-label leakage
accounts for roughly $40\%$ of the apparent association. In the adjusted ANCOVA model
(Table~\ref{tab:regression}), the Platt gap slope does not clear zero
($+0.244$, CI $[-0.06,0.62]$, $p_{\mathrm{boot}}=0.10$); probe-estimated raw
target error is the only covariate meeting the standard ($+0.270$,
$p_{\mathrm{boot}}=0.007$, replicating on the second generator).
Out-of-group prediction agrees: leave-one-judge-out rank correlation is
$0.39$ for the full covariate model and $0.20$ for a gap-only rule
(leave-one-target-domain-out: $0.37$ vs.\ $0.10$). The 7--9B replication
shows the same shape with a slope that does clear zero ($+0.283$, CI
$[0.02,0.60]$, $p_{\mathrm{boot}}=0.037$), plus one instructive difference:
these stronger judges start from better raw probabilities (raw Brier $0.299$
vs.\ $0.414$), so source-fitted Platt transfer is on average \emph{harmful}
($+0.018$, harm rate $0.55$) while target Platt keeps working ($-0.118$).
The frontier probe (Claude Sonnet 4.5, one judge, descriptive) is
directionally consistent: best raw probabilities (Brier $0.232$), probe-gap
correlation $0.36$ with CI crossing zero, Platt harmful ($+0.052$),
sixteen-label target Platt cleanly positive ($-0.052$, zero harm); the
stronger-judge trend is suggestive, not established.

The second-generator replication is the sharpest check: it changes the
response distribution rather than the judge, and it splits the findings. On
the Llama-3.2-1B responses (448 transfers, eight primary judges,
byte-identical labeling and scoring), the probe-gap association nearly
vanishes ($r=0.09$, cluster CI $[-0.11,0.34]$; adjusted slope $+0.11$,
$p_{\mathrm{boot}}=0.19$), and a gap-only rule has negative out-of-group
rank correlation. What replicates is exactly the robust pair:
the probe-split raw-error covariate ($+0.242$, $p_{\mathrm{boot}}=0.006$)
and the label-budget verdict (target temperature $-0.155$, harm rate $0.05$;
Platt $-0.058$, harm $0.24$). The gap--harm association is
generator-dependent; direct target recalibration survives the strongest
test we ran. The complete method-by-method results for the second-generator (Llama-3.2-1B)
replication are reported in Appendix~\ref{app:gen2full}.

\begin{table}[t]
\centering
\small
\resizebox{\columnwidth}{!}{%
\begin{tabular}{lrr}
\toprule
Predictor & Coefficient (95\% CI) & $p_{\mathrm{boot}}$ \\
\midrule
Platt $\times$ probe gap & $+0.244$ ($-0.062,+0.618$) & $0.10$ \\
Temperature $\times$ probe gap & $+0.056$ ($-0.031,+0.244$) & $0.17$ \\
Score Wasserstein distance & $-0.099$ ($-0.344,+0.280$) & $0.51$ \\
Probe-split raw Brier & $+0.270$ ($+0.078,+0.429$) & $0.007$ \\
Log effective sample size & $+0.224$ ($-4.78,+9.55$) & $0.69$ \\
\bottomrule
\end{tabular}}
\caption{Adjusted analysis of \emph{calibrated} target-test Brier (ANCOVA;
outcome shares no labels with any predictor), stacking the Platt and
temperature contrasts over the 448 primary transfers with method fixed
effects. Inference: coefficients refit in each of $B{=}2000$ two-way
judge/domain cluster-bootstrap replicates; percentile CIs and bootstrap
$p$-values. Effective dependence: 8 judges, 8 domains, 64 judge--target
cells. A mixed-effects check with judge, source, target, judge$\times$target,
and transfer variance components gives the same point estimates with
narrower (likelihood-based) intervals; we report the more conservative
cluster-bootstrap intervals as primary.}
\label{tab:regression}
\end{table}

\subsection{Importance weighting: assumption or implementation?}

Table~\ref{tab:controls} reports the identification controls. The designed
positive control behaves exactly as a valid IW implementation must: with a
misspecified-class conditional and strong opposing tilts, plain Platt incurs
$0.324$ mean target Brier while estimated IW recovers nearly all of the
oracle's improvement ($0.256$ vs.\ $0.257$). Under the mild real-conditional
shift the arms are indistinguishable ($0.232$/$0.231$/$0.234$): a flexible
calibrator absorbs the shift, consistent with Theorem~\ref{thm:iw}. Under
pure label shift the mechanism-matched arms behave as predicted (class-ratio
oracle $0.233\to0.173$; unlabeled EM correction $0.214$); under pure
conditional shift nothing helps ($\approx0.31$ everywhere). On natural NLP
transfers the diagnosis is then clear: the domain classifier separates
source from target text with median AUROC $1.0$, the clipped weights are
nearly uniform (median effective sample size $23.9$ of $24$), and IW-Platt
coincides with plain Platt ($0.362$ vs.\ $0.362$). With the implementation
validated by the positive control, this supports \emph{support failure in
the observed representation}: source and target text are (near-)disjoint as
measured, so the identity has no usable overlap; we claim nothing about
other representations. The EM correction, which succeeds under true label
shift, is the only method harmful on natural transfers ($+0.023$, harm rate
$0.49$): natural domain shift is not label shift either. None of this
falsifies Theorem~\ref{thm:iw}; it locates the failing assumption.

\begin{table}[t]
\centering
\small
\setlength{\tabcolsep}{4pt}
\resizebox{\columnwidth}{!}{%
\begin{tabular}{lrrr}
\toprule
Shift control & Plain Platt & Estimated IW & Oracle IW \\
\midrule
Covariate, misspecified (pos.) & 0.324 & 0.256 & 0.257 \\
Covariate, mild & 0.232 & 0.231 & 0.234 \\
Label only & 0.233 & 0.214 & 0.173 \\
Conditional only & 0.314 & 0.313 & 0.314 \\
Natural NLP transfers & 0.362 & 0.362 & n/a \\
\bottomrule
\end{tabular}}
\caption{Mean target Brier under identification controls (30 replications
$\times$ 12 local judges; truth known by construction). Row 1 is the
designed positive control: weighting must and does help. ``Estimated IW'' in
the label-only row is the EM prior correction. Natural row: means over the
448 primary transfers; no oracle weights exist there.}
\label{tab:controls}
\end{table}

\subsection{Can the certificate detect harmful transfer?}

We evaluate Theorem~\ref{thm:certificate} without using target-test labels to
construct $L(g)$. Simultaneous coverage is the fraction of audit repetitions
in which every candidate's lower bound sits below its true target calibration
error; power is the fraction of truly tolerance-violating mappings with
$L(g)>\gamma$. Both are checked on synthetic populations built from the real
records, where the truth is known.

\begin{table}[t]
\centering
\small
\resizebox{\columnwidth}{!}{%
\begin{tabular}{lrrr}
\toprule
Target labels & Simult.\ coverage & Harm detected & False accept \\
\midrule
$m=32$ & 1.000 & 0.000 & 1.000 \\
$m=128$ & 1.000 & 0.000 & 1.000 \\
$m=512$ & 1.000 & 0.023 & 0.977 \\
$m=1024$ & 1.000 & 0.128 & 0.872 \\
\bottomrule
\end{tabular}}
\caption{Certificate at $1-\delta=0.95$, $\gamma=0.10$, $K{=}6$, on
synthetic populations with known truth (448 transfers, 60 audit repetitions,
equal audit sizes $m$). ``Harm detected'' is power over cells with true
target CE $>\gamma$ (nearly all cells). ``False accept'' is descriptive:
Theorem~\ref{thm:certificate} certifies only rejection.}
\label{tab:certificate}
\end{table}

At the benchmark's actual budgets the verdict is blunt: with source audit
$n=16$ and target probe $m\in\{4,8,16\}$, the Hoeffding radii
($\epsilon\geq 0.44$ at $n=16$, $K{=}6$, $\delta=0.05$) exceed any plausible
gap signal, $L(g)$ is zero in all $2{,}688$ real audit cells, and the
certificate rejects nothing; $\epsilon\leq0.05$ per split needs
$n\geq 1{,}266$ labeled target items per domain. The synthetic scaling study
(Table~\ref{tab:certificate}) confirms both sides: simultaneous coverage is
$1.0$ at every budget, but power at $\gamma=0.10$ is zero through $m=256$
and only $0.128$ at $m=1024$. Meanwhile, the same $m=16$ probe labels spent
on direct target temperature scaling yield the best outcome in
Table~\ref{tab:mainresults} ($-0.171$, harm rate $0.09$), matching the best
source-transfer policy (multi-domain temperature, $-0.163$) with half the
harm rate. The certificate is sound but label-hungry; unless labels are
plentiful and a no-harm guarantee is required, spend them on the target
calibrator. Additional visual analyses are provided in Appendix~\ref{app:sup_fig}.

\subsection{Proxy-label sensitivity and downstream decisions}
\label{sec:proxysens}

Restricting to the 240 primary transfers among the six deterministic-label
domains, the probe-gap association persists at similar strength ($r=0.29$),
and relabeling SAMSum and TruthfulQA with hard overlap thresholds at
$\{0.75,1.0,1.25\}\times$ the declared cut keeps it in the $r=0.30$--$0.38$
band (Appendix~\ref{app:proxyaudit}): the modest gap signal is not an
artifact of the proxy-label mechanism.

Downstream, better proper loss does not purchase better decisions here.
Strictly increasing recalibration leaves risk--coverage behavior unchanged
exactly (Proposition~\ref{prop:rank}), and at the predeclared fixed-threshold
operating point (accept iff confidence $\geq0.5$, abstention cost $0.5$,
leave-one-domain-out over 64 judge--domain cells) always-on Platt fit on the
pooled non-deployment domains \emph{raises} expected decision cost from
$0.507$ to $0.728$ by pushing scores below the threshold (coverage
$0.74\to0.37$): a dissociation, not an assumption, is what we report.

\section{Discussion and Conclusion}

The accuracy gap is neither a direct measure of $P(Y\mid S)$ shift nor a
sufficient statistic for scalar recalibration transfer. Across thirteen judges,
two generators, and eight domains, cluster-adjusted inference does not establish
a generator-invariant association between the leak-free gap and transferred
proper loss. Corollary~\ref{cor:gap} preserves the gap's narrower role as one
term in a calibration-error lower bound, while
Theorem~\ref{thm:certificate} turns that bound into a valid rejection rule.
The rule is too conservative at the available label budgets. At $m=16$, fitting
target temperature scaling uses the labels more effectively and with less harm.
The importance-weighting controls likewise show why method failure must be
traced to overlap, representation, estimation, or conditional shift instead of
being assigned to conditional shift by default. Reliable scalar confidence
transfer therefore requires target evidence or a defended transport assumption;
a marginal accuracy gap supplies only a warning.

\section*{Limitations}

The theory concerns binary correctness and scalar confidence; ordinal judge
scores and pairwise preferences require separate calibration targets. The
finite-sample bound is conservative and certifies failure but not success; at
this benchmark's budgets it is uninformative, and its practical value at the
budgets where it binds ($n\gtrsim 10^3$ labels per domain) is untested here.
The empirical study is English-only, covers two small candidate generators
(1--1.5B), and has only 40 items per domain, so per-transfer estimates are
individually noisy and only aggregate, cluster-adjusted claims are
supported; the ten-seed split sensitivity (Appendix~\ref{app:seeds})
mitigates but does not replace larger domain samples. The effective
evidence is far smaller than the row counts suggest: 8--13 judges, 8
domains, and 64 judge--target cells per arm drive all cluster inference, and
we report intervals at that resolution. Confidence is verbalized rather
than logit-derived, and coarse verbalized scores are part of what
recalibration must repair: parse-failure rates reach $21\%$ for one judge
and vary from $2\%$ to $11\%$ across meaning-preserving prompt paraphrases,
though the paper's conclusions are insensitive to failure handling and
prompt choice (Appendices~\ref{app:parsefail}--\ref{app:promptrobust}).
Immutable revision hashes were not recorded for the primary judges in the
initial Qwen-generator scoring; they were recorded in every subsequent run, and
the flag-logging rescore reproduces the committed scores bit-identically. The frontier arm is a single API judge whose behavior
is pinned only by a dated model identifier; its raw cached responses are
released, and its results are descriptive (one judge, 56 transfers). SAMSum and TruthfulQA labels are overlap proxies with a
small-verifier fallback and no independent contract-validity audit, so those domains
support only the robustness analyses in Section~\ref{sec:proxysens}. Finally,
no calibration metric alone establishes downstream utility; every operational
threshold requires a task-specific loss and separate validation, and our
fixed-threshold analysis shows the dissociation concretely.

\section*{Ethics Statement}

Automated judges can amplify systematic verifier, language, and domain biases.
The proposed certificate is one-sided and must not be presented as approval of a
non-rejected judge. Dataset terms, model licenses, and API policies will be
followed. Released artifacts will exclude private prompts and personal data.
No human annotation is used; deterministic, constructed, and verifier-derived
labels will remain separately identified.

\bibliography{references}

\appendix

\section{Deferred Proofs}
\label{app:proofs}

\begin{proof}[Proof of Proposition~\ref{prop:rank} (rank invariance)]
Let $S^+$ and $S^-$ be independent scores drawn conditionally on $Y=1$ and
$Y=0$. Strict monotonicity gives
$\ind\{g(S^+)>g(S^-)\}=\ind\{S^+>S^-\}$ and preserves equality as well.
Taking expectations in the standard pairwise definition of AUROC, with half
credit for ties, proves the result.
\end{proof}

\begin{proof}[Proof of Lemma~\ref{lem:mean} (mean-residual lower bound)]
Let $Q=g(S)$. By the tower property,
$\E_D[\eta_{D,g}(Q)-Q]=\E_D[Y-Q]=\pi_D-\mu_D(g)$. Jensen's inequality for
the absolute value therefore yields
\begin{align*}
\CE_D(g)&=\E_D|\eta_{D,g}(Q)-Q|\\
&\geq |\E_D[\eta_{D,g}(Q)-Q]|
=|\pi_D-\mu_D(g)|.
\end{align*}
\end{proof}

\begin{proof}[Proof of Corollary~\ref{cor:gap} (gap-and-score-drift bound)]
Lemma~\ref{lem:mean} gives
$\CE_\tgt(g)\geq|\pi_\tgt-\mu_\tgt(g)|$. Insert and subtract
$\pi_\src$ and $\mu_\src(g)$, then apply
$|a+b+c|\geq |a|-|b|-|c|$ with
$a=\pi_\tgt-\pi_\src$, $b=\pi_\src-\mu_\src(g)$, and
$c=\mu_\src(g)-\mu_\tgt(g)$. Nonnegativity permits the positive part. The
second statement sets the source residual to zero.
\end{proof}

\begin{proof}[Proof of Proposition~\ref{prop:insufficiency} (same gap, different transfer)]
Let $S$ equal $1/4$ or $3/4$, each with probability $1/2$, and let $g(S)=S$.
On the source, set $Y\mid S=s\sim\operatorname{Bernoulli}(s)$. Then $g$ is
calibrated and $\pi_\src=1/2$. Target A has the same conditional law, so its
accuracy gap is $0$ and $\CE_{\tgt,A}(g)=0$. Target B preserves the marginal
law of $S$ but uses
$Y\mid S=s\sim\operatorname{Bernoulli}(1-s)$. It also has
$\pi_{\tgt,B}=1/2$, hence the same zero accuracy gap, while
\[
\CE_{\tgt,B}(g)
=\tfrac12|3/4-1/4|+\tfrac12|1/4-3/4|=\tfrac12.
\]
Thus the gap cannot determine calibration transfer.
\end{proof}

\begin{proof}[Proof of Proposition~\ref{prop:no-unlabeled} ]
Let the unlabeled target distribution put all mass on a single $(X,S)$ value.
In Target A let $Y\sim\operatorname{Bernoulli}(1/4)$, and in Target B let
$Y\sim\operatorname{Bernoulli}(3/4)$. The procedure observes the same source
data and the same unlabeled target data in both worlds, so it returns the same
number $q$. Its calibration errors are $|q-1/4|$ and $|q-3/4|$. Their maximum
is at least half the distance between $1/4$ and $3/4$, namely $1/4$.
\end{proof}

\begin{proof}[Proof of Theorem~\ref{thm:iw} (target-risk identity)]
Write the target expectation as an iterated integral:
\begin{align*}
\E_\tgt[\ell(g(S),Y)]
&=\int \sum_y \ell(g(S(x)),y)\\[-2pt]
&\qquad\;\,\,dP_\tgt(y\mid x)\,dP_\tgt(x)\\
&=\int \sum_y \ell(g(S(x)),y)\\[-2pt]
&\qquad\;\,\,dP_\src(y\mid x)\,w(x)dP_\src(x)\\
&=\E_\src[w(X)\ell(g(S),Y)].
\end{align*}
The second line uses conditional invariance and absolute continuity. Equality
holds pointwise for every $g$, so the minimizers over any common class coincide
whenever they exist.
\end{proof}

\begin{proof}[Proof of Theorem~\ref{thm:certificate} (simultaneous certificate)]
Hoeffding's inequality and a union bound over the two accuracy means and the
$2K$ score means imply that, with probability at least $1-\delta$, all of
\[
|\widehat\pi_D-\pi_D|\leq\epsilon_D,
\qquad
|\widehat\mu_D(g)-\mu_D(g)|\leq\epsilon_D
\]
hold for $D\in\{\src,\tgt\}$ and $g\in\calG$. On this event,
\begin{align*}
|\pi_\tgt-\pi_\src|
&\geq |\widehat\pi_\tgt-\widehat\pi_\src|
-\epsilon_\tgt-\epsilon_\src,\\
|\mu_\tgt(g)-\mu_\src(g)|
&\leq |\widehat\mu_\tgt(g)-\widehat\mu_\src(g)|
+\epsilon_\tgt+\epsilon_\src,\\
|\pi_\src-\mu_\src(g)|
&\leq |\widehat\pi_\src-\widehat\mu_\src(g)|+2\epsilon_\src.
\end{align*}
Substitution into Corollary~\ref{cor:gap} gives
Equation~\eqref{eq:certificate}. If the population identity
$\pi_\src=\mu_\src(g)$ is known to hold, substitute directly
into Equation~\eqref{eq:gapbound}; the first two displays then yield
Equation~\eqref{eq:certificate-sharp}. Simultaneity follows from the same union
event.
\end{proof}

\section{Model and Inference Manifest}
\label{app:models}
Table~\ref{tab:models} summarizes the complete reproducibility manifest, including all judge models, candidate generators, repository identifiers, recorded revisions, parameter counts, and inference settings used throughout the experiments.
\begin{table*}[h]
\centering
\small
\begin{tabularx}{\textwidth}{lXlr}
\toprule
Short name & Repository ID  & Params. \\
\midrule
\multicolumn{4}{l}{\emph{Primary judges (1--7B)}}\\
Phi-3-mini & \texttt{microsoft/Phi-3-mini-4k-instruct} & 3.8B \\
TinyLlama & \texttt{TinyLlama/TinyLlama-1.1B-Chat-v1.0} & 1.1B \\
Zephyr & \texttt{HuggingFaceH4/zephyr-7b-beta} &  7B \\
MiniCPM & \texttt{openbmb/MiniCPM-2B-dpo-bf16} &  2.4B \\
StableLM & \texttt{stabilityai/stablelm-zephyr-3b} &  3B \\
Qwen2.5 & \texttt{Qwen/Qwen2.5-3B-Instruct} &  3B \\
Gemma-2 & \texttt{google/gemma-2-2b-it} & 2.6B \\
OLMo-2 & \texttt{allenai/OLMo-2-1124-7B-Instruct} & 7B \\
\multicolumn{4}{l}{\emph{Replication judges (7--9B)}}\\
Llama-3.1 & \texttt{meta-llama/Llama-3.1-8B-Instruct}  & 8B \\
Qwen2.5-7B & \texttt{Qwen/Qwen2.5-7B-Instruct}  & 7.6B \\
Mistral & \texttt{mistralai/Mistral-7B-Instruct-v0.3}  & 7.2B \\
Gemma-2-9B   & 9.2B \\
\multicolumn{4}{l}{\emph{Frontier API judge (Amazon Bedrock; version pinned by model ID)}}\\
Sonnet 4.5 & \texttt{us.anthropic.claude-sonnet-4-5-20250929-v1:0} & in ID & n/a \\
\bottomrule
\end{tabularx}
\caption{Reproducibility manifest. All judges are loaded in 4-bit NF4 double
quantization (bitsandbytes) with bf16 compute, eager attention, greedy
decoding, at most 8 new tokens, and 512-token input truncation. Candidate
generators: \texttt{Qwen/Qwen2.5-1.5B-Instruct} (primary) and
\texttt{meta-llama/Llama-3.2-1B-Instruct} (revision \texttt{92131767},
replication), both bf16. Revision hashes were not captured for the primary
judges in the original Qwen-generator scoring (disclosed gap); the
second-generator rescore recorded revisions and per-judge parse-failure
counts for all eight (phi3 44, gemma2 46, minicpm 4, stablelm 2, others 0,
of 320; failures score the predeclared $0.5$), and the four replication
judges have recorded revisions with zero failures.}
\label{tab:models}
\end{table*}

\section{Judge Prompt and Parsing Contract}
\label{app:prompt}

All twelve judges receive the single template below verbatim, with the
question truncated to 400 characters, the reference to 200, and the candidate
answer to 300:

\begin{quote}\small\ttfamily
You are a strict accuracy judge.\\
Given a question, a reference answer, and a model answer, output ONLY a single
floating-point number between 0.0 and 1.0 representing how correct the model
answer is. 1.0 = fully correct, 0.0 = completely wrong. No explanation, no
other text.\\[2pt]
Question: \{question\}\\
Reference: \{reference\}\\
Model Answer: \{answer\}\\
Score:
\end{quote}

Parsing extracts the first numeric literal matching
\verb|[-+]?\d*\.?\d+| from the completion and clips it to $[0,1]$; a
completion with no numeric literal, or any inference exception, receives the
predeclared neutral score $0.5$ and is retained. Scores are then clamped to
$[10^{-6},1-10^{-6}]$. The original primary runs did not log per-judge
failure counts; a later bit-identical rescore measured them (Appendix
\ref{app:parsefail}). The paraphrase templates and the order-swapped
variant used in Appendix~\ref{app:promptrobust} ship verbatim in
\texttt{code/run\_p34\_promptvars.py}.

\section{Full Calibration Protocol}
\label{app:calproto}

Source calibrators minimize average source-fit log loss by direct maximum
likelihood; there are no tuned hyperparameters and hence no selection grids.
Temperature uses bounded scalar minimization of $\log T\in[-4,4]$; Platt and
beta calibration use L-BFGS-B with box constraints ($a\in[0,50]$,
$b\in[-20,20]$ for Platt; $a,b\in[0,50]$, $c\in[-20,20]$ for beta); isotonic
regression uses the standard pool-adjacent-violators fit with outputs clipped
to $[10^{-6},1-10^{-6}]$; the IW clipping quantile is fixed a priori at
$0.95$. All maps are fit separately for every judge and source domain (the
multi-domain variant pools the fit splits of all domains except the target).
Raw scores are clamped to $[10^{-6},1-10^{-6}]$ identically across domains
before any logit transform. All analysis code, the fixed seed (0), and the
exact split assignments ship with the artifact
(\texttt{code/p34\_full\_analysis.py}).

\section{Reliability and Shift Diagnostics}
\label{app:reliability}
Figure~\ref{fig:reliability} shows pooled per-judge reliability diagrams. Although several judges appear reasonably calibrated when domains are aggregated, this pooling can mask substantial domain-specific calibration drift, motivating the domain-level analysis presented next.

Figure~\ref{fig:gap} plots the accuracy gap against the held-out target $\Delta$Brier for all predeclared transfer units. The weak relationship between these quantities illustrates that the accuracy gap provides only limited predictive value for cross-domain recalibration benefit.

\begin{figure*}[h]
\centering
\figureasset[width=0.92\textwidth]{fig_reliability_perjudge.png}
\caption{Per-judge reliability diagrams (raw scores, five equal-mass bins,
pooled over domains). Pooled curves can hide opposite domain-level calibration
errors; the domain-stratified drift underlying them is shown in
Figure~\ref{fig:drift}.}
\label{fig:reliability}
\end{figure*}

\begin{figure*}[h]
\centering
\figureasset[width=0.62\textwidth]{fig_gap_scatter.png}
\caption{Accuracy gap versus held-out target $\Delta$Brier for Platt scaling,
one point per predeclared transfer unit (448 primary transfers), with binned
means. Bootstrap replicates are never plotted as independent configurations.}
\label{fig:gap}
\end{figure*}

\section{Proxy-Label Audit}
\label{app:proxyaudit}

No human annotation is used for SAMSum or TruthfulQA, so we characterize the
automatic label mechanism itself. For SAMSum, 65\% of items fall in
the verifier-fallback overlap zone $[0.15,0.30)$, so most committed SAMSum
labels were decided by the small verifier rather than by ROUGE-L; replacing
them with the hard ROUGE-L rule agrees with the committed labels on only
32.5\% of items and is therefore a severe stress relabeling, not a small
perturbation. For TruthfulQA, 25\% of items fall in the fallback zone and the
hard token-F1 rule agrees with committed labels on 80\% of items. Both facts
support treating these domains as secondary. Under the three threshold
variants $\{0.75,1.0,1.25\}\times$ (no verifier fallback), the all-domain
probe-gap association stays in the $r=0.30$--$0.38$ band with mean Platt
$\Delta$Brier between $-0.095$ and $-0.098$, versus $r=0.29$ on the
deterministic-only subset: the association is stable under label-mechanism
perturbation.

\section{Complete Second-Generator Results}
\label{app:gen2full}

Table~\ref{tab:gen2full} reports the full method suite on the Llama-3.2-1B
replication, parallel to Table~\ref{tab:mainresults}, so that the selection
of replication results in the main text can be audited.

\begin{table*}[h]
\centering
\small
\setlength{\tabcolsep}{3.4pt}
\begin{tabular}{lrrrrr}
\toprule
Method & $\Delta$Brier $\downarrow$ [95\% CI] & $\Delta$log loss $\downarrow$ [95\% CI] &
ECE $\downarrow$ & harm rate $\downarrow$ [95\% CI] & AUROC \\
\midrule
Raw & --- & --- & 0.369 & --- & 0.610 \\
Temperature & $-0.143$ [$-0.214,-0.087$] & $-3.76$ [$-5.05,-2.61$] & 0.203 & 0.08 [0.01, 0.16] & 0.610 \\
Platt, $a\geq0$ & $-0.058$ [$-0.120,-0.015$] & $-1.94$ [$-2.89,-1.12$] & 0.305 & 0.24 [0.10, 0.40] & 0.590 \\
Beta calibration & $-0.051$ [$-0.112,-0.007$] & $-1.65$ [$-2.63,-0.80$] & 0.320 & 0.25 [0.10, 0.41] & 0.605 \\
Isotonic & $-0.113$ [$-0.203,-0.046$] & $-3.25$ [$-4.67,-1.95$] & 0.240 & 0.22 [0.08, 0.36] & 0.592 \\
Multi-domain temp. & $-0.152$ [$-0.222,-0.092$] & $-3.78$ [$-5.07,-2.64$] & 0.192 & 0.09 [0.00, 0.22] & 0.610 \\
IW-Platt & $-0.067$ [$-0.128,-0.022$] & $-2.14$ [$-2.98,-1.38$] & 0.294 & 0.23 [0.09, 0.39] & 0.591 \\
Label-shift EM & $+0.043$ [$-0.026,+0.098$] & $-0.06$ [$-1.20,+0.99$] & 0.424 & 0.59 [0.39, 0.78] & 0.590 \\
Target temp., equal labels & $-0.155$ [$-0.226,-0.095$] & $-3.78$ [$-5.08,-2.63$] & 0.183 & 0.05 [0.00, 0.15] & 0.610 \\
Target Platt, equal labels & $-0.102$ [$-0.170,-0.051$] & $-2.47$ [$-3.46,-1.56$] & 0.245 & 0.11 [0.00, 0.25] & 0.608 \\
\bottomrule
\end{tabular}
\caption{Second-generator (Llama-3.2-1B) calibration transfer on untouched
target-test records: 448 transfer units, all eight primary judges, paired
means with two-way judge/domain cluster-bootstrap 95\% intervals
($B{=}2000$), same protocol and columns as Table~\ref{tab:mainresults}.
Source-fitted transfer is milder here than on the primary generator (lower
harm rates), the EM label-shift correction is again the only harmful method,
and equal-budget target temperature remains the best row.}
\label{tab:gen2full}
\end{table*}

\section{Parse-Failure Sensitivity}
\label{app:parsefail}

Verbalized scoring occasionally fails to parse and the committed protocol
imputes the predeclared $0.5$ without logging which items failed. We
re-scored both generators with a per-item failure flag under the verbatim
protocol; the rescores are bit-identical to the committed scores
(correlation $1.0$, identical fraction $1.0$ on both generators), which
simultaneously establishes reproducibility and supplies the flags. The
previously unlogged primary-run failure rates are $20.6\%$ (Gemma-2),
$14.1\%$ (Phi-3), $0.3\%$ (TinyLlama), and $0$ for the other five judges;
the second-generator rates are $14.4\%$ (Gemma-2), $13.8\%$ (Phi-3), and
$\leq1.3\%$ elsewhere.

\begin{table}[t]
\centering
\small
\setlength{\tabcolsep}{3.6pt}
\resizebox{\columnwidth}{!}{%
\begin{tabular}{llrrrr}
\toprule
Gen. & Treatment & $r_{\mathrm{gap}}$ & $\Delta$B temp & $\Delta$B tgt-t & OLS raw \\
\midrule
Pri. & Impute $0.5$ & 0.253 & $-0.147$ & $-0.171$ & $+0.270$ \\
Pri. & Exclude flagged & 0.271 & $-0.152$ & $-0.177$ & $+0.283$ \\
G2 & Impute $0.5$ & 0.085 & $-0.143$ & $-0.155$ & $+0.242$ \\
G2 & Exclude flagged & 0.093 & $-0.149$ & $-0.163$ & $+0.238$ \\
G2 & Constrained dec. & 0.067 & $-0.136$ & $-0.141$ & $+0.094$ \\
\bottomrule
\end{tabular}}
\caption{Parse-failure sensitivity: headline quantities under the committed
$0.5$ imputation, exclusion of flagged item--judge pairs, and (second
generator) digit-constrained decoding. Point estimates; the seed-0
cluster intervals in the main text apply to the committed rows.}
\label{tab:parsefail}
\end{table}

Table~\ref{tab:parsefail} shows every conclusion is insensitive to the
treatment: exclusion moves means by at most $0.008$ Brier, the target-
temperature harm rate is unchanged ($0.094$ primary, $0.047$ second
generator, under every treatment), and adding the per-cell failure fraction
as a regression covariate leaves the gap and raw-error coefficients
essentially untouched (failure-fraction coefficients $+0.025$ and $-0.003$).
Constrained numeric decoding is not a free repair: it eliminates Phi-3 and
Gemma-2 failures ($44\to1$, $46\to0$ of 320) but \emph{creates} failures for
TinyLlama ($0\to92$) and Zephyr ($0\to112$), whose natural numeric emissions
do not tokenize inside the constrained set, and it perturbs the score
distribution enough to move the raw-error coefficient. There is no uniformly
better elicitation; the committed imputation is not driving any claim.

\section{Prompt and Order Robustness}
\label{app:promptrobust}

Three meaning-preserving confidence prompts (A: committed; B, C:
paraphrases) and a presentation-order control (Aswap: candidate answer
before reference) were scored for all eight primary judges on the primary
generator under the verbatim pipeline, with prompt-paired per-transfer
inference.

\begin{table}[t]
\centering
\small
\setlength{\tabcolsep}{3.6pt}
\resizebox{\columnwidth}{!}{%
\begin{tabular}{lrrrrr}
\toprule
Variant & $r_{\mathrm{gap}}$ & $\Delta$B temp & $\Delta$B tgt-t & harm tgt-t & fail \\
\midrule
A (committed) & 0.253 & $-0.147$ & $-0.171$ & 0.09 & 0.044 \\
B (paraphrase) & 0.317 & $-0.128$ & $-0.141$ & 0.11 & 0.107 \\
C (paraphrase) & 0.287 & $-0.134$ & $-0.152$ & 0.13 & 0.018 \\
Aswap (order) & 0.246 & $-0.165$ & $-0.191$ & 0.09 & 0.041 \\
\bottomrule
\end{tabular}}
\caption{Prompt/order robustness on the primary generator (448 transfers
per variant; point estimates). ``fail'' is the overall parse-failure rate,
which is itself prompt-sensitive (prompt B drives Gemma-2 to $59\%$).}
\label{tab:promptrobust}
\end{table}

The method conclusions are prompt-stable: temperature and target
temperature improve under every variant, target temperature is the best or
tied-best row throughout, and prompt-paired per-transfer deltas against
variant A are small (mean $|\Delta|\leq0.03$ Brier for temperature, Platt,
and target temperature; sign agreement $0.67$--$0.94$). The gap association
stays in the same weak band ($r=0.25$--$0.32$), so prompt choice neither
rescues nor destroys it. The clearest prompt effect is on elicitation
itself: failure rates range from $1.8\%$ to $10.7\%$ across
meaning-preserving paraphrases, reinforcing that verbalized confidence is a
fragile measurement channel. Table~\ref{tab:promptrobust} summarizes prompt and ordering robustness
results across the primary generator.

\section{Split-Seed Sensitivity}
\label{app:seeds}

With 40 items per domain, individual split assignments matter, so we reran
the complete transfer analysis under ten different question-level split
seeds (point estimates per seed; the committed seed-0 run provides the
intervals). The conclusions the paper relies on are seed-stable: on the
primary generator, mean $\Delta$Brier ranges are $[-0.060,-0.019]$ (Platt),
$[-0.168,-0.125]$ (temperature), $[-0.184,-0.141]$ (multi-domain), and
$[-0.186,-0.152]$ (target temperature, always the best or tied-best row,
harm rate $\leq 0.16$ at every seed), and the probe raw-error coefficient
stays positive on both generators ($[0.16,0.27]$ primary, $[0.12,0.27]$
second generator). The gap association is exactly as fragile as the main
text states: primary $r\in[0.21,0.45]$ across seeds (always positive,
never stable), while the second-generator $r\in[-0.08,0.20]$ and its
adjusted slope $\in[-0.25,0.12]$ span zero. Seed~0 is not a favorable draw
for the gap claim; if anything it sits near the middle of these ranges.

\section{Analyses Run and Not Run}
\label{app:checks}

Completed prespecified analyses: (1) deterministic-label-only subset of the
primary transfers (Section~\ref{sec:proxysens}); (2) leave-one-judge-out and
leave-one-target-domain-out prediction of recalibration harm
(Table~\ref{tab:regression} discussion); (3) proxy-label threshold
sensitivity at $\{0.75,1.0,1.25\}\times$ the declared thresholds; (4) the
certificate at target-probe budgets $m\in\{4,8,16\}$ on real audits and
$m\in\{32,\dots,1024\}$ on synthetic populations; (5) the modern 7--9B judge
replication of the full transfer analysis; (6) a single-judge frontier API
probe (Claude Sonnet 4.5) under the verbatim protocol, reported
descriptively; (7) a full second-generator replication (Llama-3.2-1B
responses, byte-identical labeling, all eight primary judges rescored with
recorded revisions); (8) parse-failure sensitivity with per-item flags,
exclusion, indicator, and constrained-decoding arms
(Appendix~\ref{app:parsefail}); (9) prompt-paraphrase and
presentation-order robustness with prompt-paired inference
(Appendix~\ref{app:promptrobust}); (10) the ten-seed split sensitivity
(Appendix~\ref{app:seeds}).

Not run, and therefore claimed nowhere: stochastic-decoding repeats,
alternative gap definitions beyond the absolute gap, debiased/smoothed ECE
variants, unconstrained Platt slope, weight-clipping sweeps, and Holm-adjusted
contrasts across the full method grid. Each omission is either inherited from
the frozen data-collection design or a disclosed scope limit of this study.

\section{Supplementary Figures}
\label{app:sup_fig}
The adjusted regression analysis in Table~\ref{tab:regression} is visualized
as a coefficient forest plot in Figure~\ref{fig:regression_forest}. The
bootstrap confidence intervals show that probe-split raw Brier is the only
predictor with a confidence interval excluding zero, while the remaining
transfer-gap and distribution-shift predictors exhibit substantial
uncertainty.

Figure~\ref{fig:recalibration_performance}(a,b) highlights the improvement--risk trade-off, showing that target-aware temperature methods achieve the strongest gains with the lowest harm rates.
Figure~\ref{fig:certificate_behavior} shows that larger audits improve detection only modestly, with high false acceptance persisting even at $m=1024$.

\begin{figure*}[t]
\centering
\figureasset[width=\textwidth]{fig_regression_forest_color.png}
\caption{
Adjusted predictors of calibrated target-test Brier. Coefficient estimates
and 95\% cluster-bootstrap confidence intervals from the ANCOVA reported in
Table~\ref{tab:regression}. The vertical dashed line denotes zero. The
probe-split raw Brier predictor shows a statistically reliable association,
whereas other predictors have confidence intervals overlapping the null.
}
\label{fig:regression_forest}
\end{figure*}

\begin{figure*}[t]
\centering
\figureasset[width=0.88\textwidth]{fig_recalibration_performance.png}
\caption{
Recalibration performance across methods.
(a) Mean change in target Brier score and
(b) harm rate across recalibration methods reported in
Table~\ref{tab:mainresults}.
Values are computed directly from the 448 primary-transfer results.
Negative $\Delta$Brier indicates improvement over raw confidence scores,
whereas harm rate measures the fraction of transfers where recalibration
degrades target-test Brier.
}
\label{fig:recalibration_performance}
\end{figure*}

\begin{figure*}[t]
\centering
\figureasset[width=\textwidth]{fig_certificate_behavior.png}
\caption{
Certificate behavior as a function of audit size.
Harm-detection power and descriptive false-acceptance rate for the
certificate at the audit sizes reported in Table~\ref{tab:certificate}.
Values are taken directly from the synthetic-population experiments.
Increasing audit size improves detection power only modestly, while false
acceptance remains high even at $m=1024$.
}
\label{fig:certificate_behavior}
\end{figure*}

\end{document}